\icmltitlerunning{Unsupervised Ensemble Regression}
\newcommand{\E}{\mathbb{E}}
\newcommand{\R}{\mathbb{R}}
\newcommand{\var}{\mathrm{Var}}
\newcommand{\1}{\mathbf 1}
\newcommand{\muY}{\theta_1}
\newcommand{\mtwo}{\theta_2}
\newcommand{\w}{\mathbf{w}}
\newcommand{\rb}{\boldsymbol{\rho}}
\newcommand{\ab}{\boldsymbol{a}}
\newcommand{\dor}{\delta_{\mbox{\tiny or}}}
\DeclareMathOperator*{\argmin}{arg\,min}
\DeclareMathOperator*{\argmax}{arg\,max}
\newtheorem{theorem}{Theorem}
\newtheorem{remark}{Remark}
\newtheorem{lemma}{Lemma}
\begin{document}

\twocolumn[
\icmltitle{Unsupervised Ensemble Regression}

\icmlauthor{Omer Dror}{omer.dror@weizmann.ac.il}
\icmladdress{Weizmann Institute of Science}
\icmlauthor{Boaz Nadler}{boaz.nadler@weizmann.ac.il}
\icmladdress{Weizmann Institute of Science}
\icmlauthor{Erhan Bilal}{ebilal@us.ibm.com}
\icmladdress{IBM Thomas J. Watson Research Center}
\icmlauthor{Yuval Kluger}{yuval.kluger@yale.edu}
\icmladdress{Yale University School of Medicine}

\icmlkeywords{unsupervised ensemble learning, ensemble methods, ensemble regression, machine learning}

\vskip 0.3in
]

\begin{abstract}
\label{sec:abstract}
Consider a regression problem where there is no labeled data and the only observations are the predictions \(f_i(x_j)\) of \(m\) experts \(f_{i}\) over many samples \(x_j\). With no knowledge on the accuracy of the experts, is it still possible to accurately estimate the unknown responses \(y_{j}\)? Can one still detect the least or most accurate experts?  
In this work we propose a framework to study these questions, based on the assumption that the \(m\) experts have uncorrelated deviations 
from
the optimal predictor. Assuming the first two moments of the response are known, we develop methods to detect the best and worst regressors, and derive U-PCR, a novel principal components approach
for unsupervised ensemble regression. 
We provide theoretical 
support  for U-PCR and illustrate its improved accuracy over the ensemble mean and median on a variety of regression problems.  
\end{abstract}

\section{Introduction}
\label{sec:intro}

Consider the following {\em unsupervised ensemble regression} setup:\ The only observations are an $m\times n$ matrix of real-valued predictions $f_i(x_j)$ made by $m$ different regressors or experts $\{f_i\}_{i=1}^m$, on a  set of unlabeled samples $\{x_j\}_{j=1}^n$. There is no a-priori knowledge on the accuracy of the experts and no labeled data to estimate it.   Given only the above observed data and minimal knowledge about the unobserved response, such as its mean and variance,
is it possible to (i) {\em rank} the $m$ regressors, say by their mean squared error; or at least detect the most and least accurate ones? and (ii) construct an ensemble predictor for the unobserved continuous responses \(y_j\), more accurate than both the individual predictors and simple ensemble strategies such as their mean or median?

Our motivation for studying this problem comes from several application domains, where such scenarios naturally arise. Two
such domains are biology and medicine, where in recent years
there are extensive collaborative efforts to solve challenging prediction
problems, see for example the past and ongoing  DREAM competitions\footnote{\texttt{www.dreamchallenges.org}}. Here, multiple participants construct prediction models based on published labeled data, which are then evaluated on held-out data whose statistical distribution may differ significantly from the training one. A key question is whether one can provide more accurate answers than those of the individual participants, by cleverly combining their prediction models. In the experiment section \ref{sec:experiments} we present one such example, where competitors had to predict the concentrations of multiple phosphoproteins in various cancer cell lines \cite{hill2016inferring}.
Understanding the causal relationships between these proteins is important as it may explain variation in disease phenotypes or therapeutic response \cite{hill2016context}.
A second application comes from regression problems in computer vision. A specific example, also described in Section \ref{sec:experiments}, is accurate estimation of the bounding box around detected objects in images by combining several pre-constructed deep neural networks. 

The regression problem we consider in this paper is a particular instance of unsupervised ensemble learning. 
Motivated in part by crowdsourced labeling tasks,  
 previous works on unsupervised ensemble learning mostly focused on \textit{discrete} outputs, considering binary, multiclass or ordinal classification 
\cite{Johnson1996,sheng2008get,whitehill2009whose,raykar2010learning,platanios2014estimating,platanios2016estimating,zhou2012learning}. \citet{dawid1979maximum} were among the first to consider the problem of unsupervised ensemble classification. Their approach was based on the assumption that experts make independent errors conditioned on the unobserved true class label.
Even for this simple model, estimating the experts' accuracies and the unknown  labels via maximum likelihood is a non-convex problem, typically solved by the expectation-maximization algorithm. Recently, several authors proposed spectral and tensor based methods that are computationally 
efficient and asymptotically consistent  \cite{anandkumar2014tensor,zhang2014spectral,jaffe2015estimating}.

In contrast to the discrete case, nearly all previous works on  ensemble regression considered only the supervised setting. Some   ensemble methods, such as boosting and random
forest are widely used in practice. 

In this work we propose a framework to study unsupervised ensemble regression, focusing on linear aggregation methods. In  Section \ref{sec:setup}, we first review the optimal weights that minimize the mean squared error (MSE) and highlight the key quantities that need to be estimated in an unsupervised setting. Next, in Section \ref{sec:priorart} we describe related prior work in supervised and unsupervised regression. 

Our main contributions appear in Section \ref{sec:unsupervised_ensemble_regression}. We propose a framework for unsupervised ensemble regression, based on an analogue of the Dawid and Skene classification model, adapted to the regression setting. Specifically, we assume that the \(m\) experts make approximately uncorrelated errors with respect to the optimal predictor that minimizes the MSE. 
We show that if we knew the minimal attainable MSE, then under our assumed model, the accuracies of the experts  can be consistently estimated by solving a system of linear equations.  
Next, based on our theoretical analysis, we develop methods to estimate this minimal MSE, detect the best and worst regressors and derive U-PCR, a novel unsupervised principal components ensemble regressor. 
  Section \ref{sec:experiments}  illustrates our methods  and the improved accuracy of U-PCR\ over the ensemble mean and median, on a variety of regression problems.
These include both problems for which we trained multiple regression algorithms, as well as the two applications mentioned above where the regressors were constructed by a third party and only their predictions were given to us. 

Our main findings are that given only the predictions
$f_i(x_j)$ and the first two moments of the response: (i) our approach is able to distinguish between 
hard prediction problems where any linear aggregation
of the $m$ regressors yields large errors, and feasible problems where
a suitable linear combination of the regressors can accurately estimate the response; (ii) our ranking method is able to reliably detect the most and least accurate experts; and (iii) quite consistently, U-PCR performs as well as and sometimes significantly better than the mean and median of the $m$ regressors. We conclude in Section \ref{sec:summary} with a summary and  future research directions  in unsupervised ensemble regression.

\section{Problem Setup}
\label{sec:setup}

Consider a regression problem
with a continuous response \(Y\in\mathbb{R}\) and explanatory features  $X$ from an instance space $\mathcal X$. Let \(\lbrace f_1,\ldots,f_m \rbrace \) be \(m\) pre-constructed regression functions, $f_i:\mathcal X\to\mathbb{R}$, interchangeably also called experts, and let \(\{x_j\}_{j=1}^n\) be  \(n\) i.i.d. samples from the marginal distribution of $X$. We consider the following unsupervised ensemble regression setting, in which the only observed data is the \(m\times n\)
matrix of predictions  
\begin{align}
\begin{pmatrix}
f_1(x_1) & \cdots & f_1(x_n) \\ 
\vdots & \ddots & \vdots  \\ 
f_m(x_1) & \cdots & f_m(x_n) \end{pmatrix} .
        \label{def:data_matrix} 
\end{align}
In particular, there are no labeled data pairs $(x_j, y_j)$ and no a-priori knowledge on the accuracy of the $m$ regressors.

Given only the matrix (\ref{def:data_matrix}), and explicit knowledge of the first two moments of \(Y\), we ask whether  it is possible to: (i) estimate the accuracies of the \(m\) experts, or at least identify the best and worst of them, and (ii) accurately estimate the responses \(y_{j}\) by an ensemble method $\hat{y}: \{f_i(x) \}_{i=1}^m \mapsto \R$, whose input are the predictions of $f_1, \dots, f_m$. As we explain below knowing the first two moments of $Y$ seems necessary as otherwise the data matrix (\ref{def:data_matrix}) can be arbitrarily shifted and scaled. Such knowledge is reasonable in various settings, for example from past experience, previous observations or physical
principles.     
 
Following the literature on supervised ensemble regression, we consider {\em linear} ensemble learners. Specifically, we 
restrict ourselves to the following subclass
\begin{equation}
        \hat y_\w (x) = \muY + \sum_{i=1}^m w_i \big( f_i(x) - \mu_i \big) \label{eq:unbiased_ensemble_learner}
\end{equation}
where $\muY=\E[Y]$ and $\mu_i = \E[f_i(X)]$  are assumed known, and $\mathbf w = (w_1,\ldots,w_m)^T$. Note that in this subclass, for any vector $\mathbf w$, $\E_{(X,Y)}\big[\hat y_\w(X)\big] = \muY$. While $\mu_i$ is typically unknown, it can be accurately estimated given the predictions of $f_i$ in Eq. (\ref{def:data_matrix}) and provided $n\gg 1$. 

As our risk measure, we use the popular mean squared error 
$\mbox{MSE}=\E[(Y-\hat y(X))^2]$. For completeness, we first review the optimal weights under this risk and describe several supervised ensemble methods that estimate them.

\paragraph{Optimal Weights.}
Let $C$ be the $m \times m$ covariance matrix of the $m$ regressors with elements 
\begin{equation}
        C_{ij} = \E[(f_i(X) - \mu_i)(f_j(X) - \mu_j)]\ ,
                \label{def:C}
\end{equation}
and let $\boldsymbol \rho = (\rho_1,\ldots,\rho_m)^T$ be the vector of covariances between the individual regressors and the true response,
\begin{equation}
        \rho_i = \E_{(X,Y)}[(Y-\muY)(f_i(X) - \mu_i)]\ .
                \label{def:rho}
\end{equation} 
 Let $\w^*$ be a weight vector that minimizes the MSE
\begin{equation}
        \w^* = \arg\!\min_\w\ \E_{(X,Y)}\Big[\big(\hat y_\w(X) - Y\big)^2\Big]
                \label{eq:w*}        
\end{equation}
Then it is easy to show that: 

\begin{lemma} \label{lemma:w*}
The weights $\w^*$ satisfy
\begin{equation}
        \boldsymbol \rho = C\w^*.
                \label{eq:p=Cw*}
\end{equation}
\end{lemma}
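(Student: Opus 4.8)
The plan is to reduce the minimization (\ref{eq:w*}) to the normal equations of a convex quadratic. First I would substitute the ensemble learner (\ref{eq:unbiased_ensemble_learner}) into the risk and collect the $\muY$ terms, rewriting the residual in centered form as
\[
\hat y_\w(X) - Y = \sum_{i=1}^m w_i\big(f_i(X)-\mu_i\big) - (Y-\muY).
\]
The purpose of this step is that both the prediction and the response now appear in centered form, so that the expectations of the cross terms arising from the square will coincide with the definitions (\ref{def:C}) and (\ref{def:rho}).

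Next I would square this residual and take the expectation term by term. The purely quadratic piece gives $\sum_{i,j} w_i w_j\,\E[(f_i(X)-\mu_i)(f_j(X)-\mu_j)] = \w^T C\,\w$ by (\ref{def:C}); the cross piece gives $-2\sum_i w_i\,\E[(Y-\muY)(f_i(X)-\mu_i)] = -2\,\w^T\rb$ by (\ref{def:rho}); and the remaining term $\E[(Y-\muY)^2]=\var(Y)$ does not depend on $\w$. Collecting these yields the compact expression
\[
\E_{(X,Y)}\big[(\hat y_\w(X)-Y)^2\big] = \w^T C\,\w - 2\,\w^T\rb + \var(Y).
\]

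Finally, since $C$ is a covariance matrix it is positive semidefinite, so the right-hand side is a convex quadratic in $\w$ whose global minimizers are precisely its stationary points. Setting the gradient $2C\w - 2\rb$ to zero gives $C\w^*=\rb$, which is exactly the claimed identity (\ref{eq:p=Cw*}). These are the familiar least-squares normal equations, so I do not expect any genuine difficulty; the only subtlety is that when $C$ is singular the minimizer $\w^*$ need not be unique. This does not affect the lemma, however, because the normal equation $\rb = C\w^*$ holds for \emph{every} minimizer, and positive semidefiniteness guarantees that every stationary point is in fact a global minimizer. The statement therefore holds as written for any such $\w^*$.
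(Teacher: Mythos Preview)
Your proposal is correct and follows exactly the standard derivation the paper has in mind: the paper simply writes ``Then it is easy to show that'' and states the lemma without proof, so your expansion of the MSE as $\w^T C\w - 2\w^T\rb + \var(Y)$ followed by setting the gradient to zero is precisely the intended argument. Your remark on the non-uniqueness of $\w^*$ when $C$ is singular is a nice touch and consistent with the paper's own comment that $\w^*$ is unique only when the regressors are linearly independent.
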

Note that $\w^*$ depends only on $\boldsymbol \rho$ and $C$. 
If the $m$ ensemble regressors are linearly independent, then 
$C$ is invertible and $\w^*$ is unique. In our unsupervised scenario, the matrix $C$ can be estimated from the predictions $f_i(x_j)$.
In contrast, estimating $\boldsymbol \rho$ directly from its definition in Eq. (\ref{def:rho}) requires labeled data. A key challenge in unsupervised ensemble regression is thus to estimate $\boldsymbol \rho$ without any labeled data.

\section{Previous Work}\label{sec:priorart}

This section provides a brief overview of prior art, first methods for unsupervised ensemble regression, and then two \emph{supervised} ensemble regression methods that are related to our approach. We conclude this section with the popular Dawid-Skene model of unsupervised ensemble classification, also relevant to our work.     

\subsection{Unsupervised Ensemble Regression}

Whereas many works considered unsupervised ensemble classification, far fewer studied the regression case.  \citet{donmez2010unsupervised}, proposed a general framework called unsupervised-supervised learning. In the case of regression, they assumed that the marginal probability density function of the response $p(y)$ is known and that the regressors follow a \emph{known} parametric model with parameter $\theta$. In this setup, given only unlabeled data,  $\theta$ can be estimated  by maximum likelihood. 
In contrast, our approach is far more general as we do not assume a  parametric model, nor  knowledge of the full marginal density $p(y)$.

More closely related is the recent work of
 \citet{wu2016spectral}, which in turn is based on \citet{ionita2016spectral} and \cite{Parisi2014}.
 Here, the authors compute the leading eigenvector of the covariance of the \(m\) regressors, and use it both to detect inaccurate regressors and to determine the weights of the accurate ones.
However, as \citet{wu2016spectral} themselves write, this relation between the leading eigenvector and regressor accuracy  ``is based on intuition, and we do not have a rigorous mathematical proof so far". Our work provides a solid theoretical support for a variant of this spectral approach.

\subsection{Supervised Ensemble Regression}

As reviewed by \citet{ensemblesurvey}, quite a few supervised ensemble regressors were proposed over the past 30 years. These can be 
broadly divided into two groups. Methods in the first group re-train 
a basic regression algorithm multiple times on different subsets of the labeled data, possibly also assigning  weights to the various labeled instances. Examples include stacking \cite{wolpert1992stacked,Breiman1996,Leblanc1996},
random forest \cite{breiman2001random_forests} and boosting \cite{freund1995desicion,friedman2000additive}.

In contrast, 
ensemble methods in the second group view the regressors as \emph{pre-constructed} and only estimate the weights of their linear combination.    \citet{perrone1992networks} and  \citet{Merz1999} derived two such methods, which we briefly describe below.  

While not directly related, there is also extensive literature on supervised combination of forecasts in time series analysis and on methods to combine multiple estimators, see \citet{timmermann2006forecast,lavancier2016general} and many references therein.

\subsection{Generalized Ensemble Method}\label{ss:GEM}

\citet{perrone1992networks} were among the first to consider supervised ensemble regression. They defined the {misfit}  of predictor $i$ as $m_i(x) = f_i(x) - y$, and proposed the  Generalized Ensemble Method (GEM),  with $\sum_i w_i = 1,$
$$
        \hat y_{\mbox{\tiny GEM}}(x) = \sum_i w_i f_i(x) = y + \sum_i w_i m_i(x).
$$
 The corresponding weights that minimize the MSE are 
\begin{align}
        w_i^{\mbox{\tiny GEM}} = {\sum_j C^{*-1}_{ij} \Big/
        \sum_{j,k} C^{*-1}_{jk}} .
                \label{eq:PCweights}
\end{align}
%
where $C^*$ is the $m \times m$ misfit population covariance matrix 
\begin{equation}
                \label{def:C*}
        C^*_{ij} = \E_{(X,Y)}[m_i(X) m_j(X)] \ .
\end{equation}
\citet{perrone1992networks} proposed to estimate the unknown matrix $C^*$ and consequently $\mathbf w^{\mbox{\tiny GEM}}$ using a labeled set  \(\{(x_i,y_i)\}_{i=1}^{n_{\mbox{\tiny train}}}\). Unfortunately, in many practical scenarios multi-colinearity between the $m$ regressors leads to an ill conditioned
matrix $C^*$, that cannot be robustly inverted. 

\subsection{PCR*}

A common approach to handle ill conditioned multivariate problems is via  principal component regression \cite{jolliffe2002principal}. 
In the context of supervised ensemble learning, \citet{Merz1999} suggested such a method, denoted PCR*.  Given a labeled set \(\{(x_i,y_i)\}_{i=1}^{n_{\mbox{\tiny train}}}\) 
let $\hat{C}$ be the \(m\times m\) sample covariance matrix of the \(m\) regressors, 
\begin{align*}
        \hat C_{ij} &= \frac{1}{n_{\text {train}}}\sum_{k=1}^{n_{\text{train}}} \big(f_i(x_k) - \hat \mu_i\big)\big(f_j(x_k) - \hat \mu_j\big) 
\end{align*}
where $\hat \mu_i = \frac{1}{n_{\mbox{\tiny train}}}\sum_{j=1}^{n_{\mbox{\tiny train}}}f_i(x_j)$, and let  $\mathbf v_1, \ldots, \mathbf v_K$ be the top 
\( K\) leading eigenvectors of  $\hat C$. 
\citet{Merz1999}
proposed a weight vector of the form $\w = \sum_{k=1}^K a_k \mathbf v_k$, with coefficients $a_k$ determined by least squares regression over the training set. The number of principal components $K$ is chosen by minimizing $V$-fold cross validation error. 

In the common scenario where some ensemble regressors are highly correlated,  the matrix $C^*$ is ill-conditioned. The GEM estimator, which inverts \(\hat C^* \) then yields unstable predictions. In contrast,
PCR* with a small number of components can be viewed as a regularized method, providing stability and robustness. 
In a supervised setting, \citet{Merz1999} found PCR* to outperform GEM. 

\subsection{Unsupervised Ensemble Classification}

The simplest model for unsupervised ensemble classification, going back to  \citet{dawid1979maximum} is that conditional on the label $Y$, classifiers make independent errors
\begin{equation}
        \Pr \big( f_i(X),f_j(X)  | Y \big) = \Pr( f_i(X) | Y ) \cdot \Pr ( f_j(X)  | Y ).
        \label{eq:DS} 
\end{equation}
\citet{dawid1979maximum} estimated the classifier accuracies and the labels by the EM\ method. In recent years several authors developed computationally efficient and rate optimal methods to estimate these quantities \cite{anandkumar2014tensor,zhang2014spectral,jaffe2015estimating}.

To the best of our knowledge, our work is the first to propose an analogue of this assumption to the regression case, rigorously study it, and consequently derive corresponding unsupervised ensemble regression schemes.

\section{Unsupervised Ensemble Regression}\label{sec:unsupervised_ensemble_regression}

Given only the predictions $f_i(x_j)$,  
the simplest unsupervised approach to estimate the response \(y\) 
at an instance \(x\) is to average the \(m\) regressors, 
 $$
        \hat y^{\mbox{\tiny AVG}}(x) = \frac{1}{m} \sum_{i=1}^m f_i(x)\ .
$$
Averaging is the optimal linear estimator when all regressors make independent zero-mean errors of equal variance. 
A more robust but non-linear  method is the median, 
$$
        \hat y^{\mbox{\tiny MED}}(x) = median\big(f_1(x),\ldots,f_m(x)\big).
$$
Averaging and  median are na\"ive estimators in the sense that prediction at each  $x_{k}$ depends only on \(f_{i}(x_k)\) and does not depend on the  other observations $f_i(x_j)$, \(x_{j}\neq x_{k}\).

As we show theoretically below and illustrate empirically in Section \ref{sec:experiments}, under some reasonable assumptions, one can do significantly better than the ensemble mean and median by analyzing all the data $f_i(x_j)$ and  in particular the \(m\times m \) covariance matrix \(C\) of the \(m\)
regressors. 

Specifically, we propose a novel framework to study unsupervised ensemble regression, based on the assumption that the \(m\) experts make approximately uncorrelated errors with respect to the optimal predictor. We develop methods to detect the best and worst regressors and derive U-PCR, a novel unsupervised principal components ensemble regressor. 
Similar to \citet{Merz1999}, the weight vector of U-PCR is a linear combination of the top few eigenvectors of $C$ (typically just one or two). The key novelty is that we   estimate
the coefficients in a fully unsupervised manner. 

To this end, we do assume knowledge of the first two moments of $Y$. Such knowledge seems inevitable, as otherwise the observed data may be arbitrarily shifted and scaled without changing the correlation of the regressors. Knowing the first moment  $\muY=\E[Y]$, allows to  estimate the bias $b_i=\E[f_i(X) - Y]$ of each regressor $f_i$ by its mean over the $n$ unlabeled samples,
$$
\hat b_i = \frac{1}{n}\sum_{j=1}^n f_i(x_j) - \muY = b_i + o_P(1).
$$ 
Knowledge of $\var(Y)$ allows a rough estimate of the accuracy of the $m$ regressors. A very accurate regressor must have $\var(f_i)\approx \var(Y)$, whereas if \(\var(f_i)\ll \var(Y)\) or
$\var(f_i)\gg \var(Y)$, then $f_i$ must have a large error. 

In what follows we consider predicting the mean-centered responses \(y_j-\theta_1\) by a linear combination of the mean centered predictors, $\hat y(x) = \sum_j w_j (f_j(x)-\hat{b}_j-\muY)$. 
We thus work with the mean centered matrix
$$
        Z_{ij} = f_i(x_j) - \hat b_i \ -\muY.
$$
This is equivalent to assuming that $\E[f_i(X)] = \E[Y] = 0$. 

\subsection{Statistically Independent Errors}

As discussed in Section \ref{sec:setup}, in light of the optimal weights in Eq. (\ref{eq:p=Cw*}), the key challenge in unsupervised ensemble regression is to estimate the vector \(\rb\) of Eq. (\ref{def:rho}), without any labeled data. 

To this end, we propose the following regression analogue of the Dawid-Skene assumption of conditionally independent experts.
Recall that when the risk function is the MSE, the optimal regressor is the conditional mean, 
$$
g(x)=\E[Y|X=x].
$$  
Its mean is $g_1=\E[g(X)]=\E[Y]=0$, and its MSE is
$
\E[(Y-g(X))^2]=\var(Y) -g_2,
$
where 
\begin{equation}
g_2=\E_{X}[g(X)^2] = \E_{(X,Y)}[g(X)Y] .
        \label{eq:g_2}
\end{equation}
For each regressor,  write $f_i(x)=g(x)+h_i(x)$. Since $f_i$ is mean centered, $E[h_i(X)]=0$. Hence, \(\rho_i=\E[f_i(X)Y]\) simplifies to 
\begin{equation}
\rho_i 
=
\E[g(X)Y]+\E[{h_i(X)}Y] = g_2 + a_i\,.
                \label{eq:rho_g_ai}
\end{equation}
Similarly, the MSE of regressor $i$ is 
\begin{equation}
\mbox{MSE}(f_i) 
     =g_2-2 a_i+\E[h_i(X)^2]. 
        \label{eq:MSE_f}
\end{equation}

In this notation, the challenge is thus to estimate \(g_{2}\) and the vector  $\ab = (a_1,\ldots,a_m)$. Inspired by Eq. (\ref{eq:DS}) in the case of classification, we assume the $m$ regressors make {\em independent errors  with respect to $g(X)$}, namely that
\footnote{Strictly speaking, the assumption is that the deviations from $g(X)$ are uncorrelated and not necessarily independent. } 
\begin{equation}
\E[h_i(X)h_j(X)]=0.
        \label{eq:ind_h}
\end{equation} 
This assumption is reasonable, for example,  when the \(m\) regressors 
were trained independently and are rich enough to well approximate the conditional mean \(g(X)\). Note that when the response \(Y\) is perfectly predictable from the  features \(X\), then \(g(X)=Y\) and our assumption then states that the $m$ regressors make independent errors with respect to the response \(Y\). This can be viewed as the regression equivalent of the Dawid-Skene model in classification. 

Next, we consider how to estimate the values $a_i$ under 
the independent error assumption of Eq. (\ref{eq:ind_h}). Suppose for a moment that the value of \(g_{2}\) of Eq. (\ref{eq:g_2}) was known. We shall discuss how to estimate it in the next section. As the following theorem shows, in this case, we can consistently estimate $\rb$ by solving a system of linear equations. \begin{theorem} \label{thm:independent_misfits}
Assume that the given $m \ge 3$ regressors  make pairwise independent errors with respect to the conditional mean. If \(g_{2}\) is known then 
given only the data matrix $Z$, we can consistently estimate the vector $\rb$ at rate $O_P(1/\sqrt{n})$.
\end{theorem}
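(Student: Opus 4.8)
The plan is to use the uncorrelated-error assumption \eqref{eq:ind_h} to show that the off-diagonal entries of the regressor covariance $C$ depend linearly and explicitly on the unknown vector $\ab$, and then to invert the resulting linear system. First I would expand each covariance using the decomposition $f_i(x)=g(x)+h_i(x)$, so that
\[
C_{ij}=\E[g(X)^2]+\E[g(X)h_i(X)]+\E[g(X)h_j(X)]+\E[h_i(X)h_j(X)].
\]
The cross terms simplify by the tower property: since $h_i(X)$ is a function of $X$ and $g(X)=\E[Y\mid X]$, we have $\E[g(X)h_i(X)]=\E[Y\,h_i(X)]=a_i$, which is precisely the quantity appearing in \eqref{eq:rho_g_ai}. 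Invoking \eqref{eq:ind_h}, the last term vanishes for $i\neq j$, leaving the clean relation $C_{ij}=g_2+a_i+a_j$ for all $i\neq j$.

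Next, treating $g_2$ as known, I would set $b_{ij}=C_{ij}-g_2=a_i+a_j$ for every pair $i<j$. This is a system of $\binom{m}{2}$ linear equations in the $m$ unknowns $a_1,\ldots,a_m$, of the form $M\ab=\mathbf b$, where $M$ is the unsigned edge--vertex incidence matrix of the complete graph on $m$ vertices (each row has exactly two ones). The hard part, and the step where the hypothesis $m\ge 3$ is used, is verifying that this system pins down $\ab$ uniquely. The rank of such an incidence matrix equals $m$ exactly when the underlying graph is non-bipartite, which the complete graph is for every $m\ge 3$ because it contains a triangle; for $m=2$ the single equation $a_1+a_2=b_{12}$ is underdetermined. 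Hence for $m\ge 3$ the matrix $M$ has full column rank, so $\ab=(M^\top M)^{-1}M^\top\mathbf b$ is the unique solution, and $\rho_i=g_2+a_i$ is recovered through \eqref{eq:rho_g_ai}. A closed form also follows by summing all equations to get $s=\sum_k a_k=\bigl(\sum_{i<j}b_{ij}\bigr)/(m-1)$ and then setting $a_i=\bigl(\sum_{j\neq i}b_{ij}-s\bigr)/(m-2)$.

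For the consistency statement I would replace $C$ by the sample covariance $\hat C$ formed from the centered matrix $Z$. A standard central limit argument gives $\hat C_{ij}-C_{ij}=O_P(1/\sqrt n)$ entrywise under finite fourth moments, with the estimated centering constants $\hat b_i$ contributing only a negligible $O_P(1/n)$ correction. Since the solution operator $\mathbf b\mapsto\ab=(M^\top M)^{-1}M^\top\mathbf b$ is a fixed linear, hence Lipschitz, map that does not depend on $n$, the plug-in estimate satisfies $\hat a_i=a_i+O_P(1/\sqrt n)$, and adding the known constant $g_2$ yields $\hat\rho_i=\rho_i+O_P(1/\sqrt n)$, the asserted rate. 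I expect the only genuine subtlety to be the rank argument isolating $m\ge 3$; the moment identity and the propagation of the $O_P(1/\sqrt n)$ rate through a fixed linear map are both routine.
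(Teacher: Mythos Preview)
Your proposal is correct and follows essentially the same route as the paper: derive $C_{ij}=g_2+a_i+a_j$ from the decomposition $f_i=g+h_i$ together with \eqref{eq:ind_h}, solve the resulting linear system for $\ab$, and propagate the $O_P(1/\sqrt n)$ fluctuation of $\hat C$ through the fixed linear solution map. Your version is in fact slightly more careful than the paper's own proof: you justify $\E[g(X)h_i(X)]=a_i$ via the tower property, you supply a concrete rank argument (the unsigned incidence matrix of $K_m$ has full column rank iff the graph is non-bipartite, which isolates $m\ge 3$), and you give an explicit closed form for $a_i$, whereas the paper simply asserts that for $m\ge 3$ ``there are enough linearly independent equations.''
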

\begin{proof}
It is instructive to first consider the population setting where $n\to\infty$.
Here, under the  assumption (\ref{eq:ind_h}), the off-diagonal entries of the population covariance are 
\begin{equation}
C_{ij}=\E[f_i(X)f_j(X)] = g_2 + a_i + a_j
        \label{eq:C_g_a}
\end{equation}   
Since $C$ is symmetric, these off-diagonal entries provide ${m \choose 2}$ linear equations for the \(m\) unknown variables $\ab~=~(a_1,\ldots,a_m$). Thus, if  $m \ge 3$ there are enough linearly independent equations to uniquely recover $\ab$.
The vector \(\rb\) can then be computed from Eq. (\ref{eq:rho_g_ai}). 

In practice, the population matrix \(C\) is unknown. However, given the 
$m\times n$ matrix $Z$, we may estimate it by the sample covariance $\hat C$. Since $\hat C_{ij} = C_{ij} + O_P(\tfrac{1}{\sqrt{n}})$, estimating $(a_1,\ldots,a_m)$ by least-squares yields a consistent estimator $\hat{\boldsymbol \rho}$ with asymptotic error $O_P(1/\sqrt{n})$.
\end{proof}


\begin{remark}
In practice, assumption (\ref{eq:ind_h}) that all \(m\) regressors make
independent errors, may be strongly violated at least for some pairs. To be robust to deviations from this assumption one may choose a suitable loss function $L(\cdot)$, and solve the optimization problem
\begin{equation}
\hat{\ab}
=\argmin_{(a_1,\ldots,a_m)} \sum_{i<j}
L(\hat{C}_{ij}-g_2 - a_i - a_j).
        \label{eq:a_L}
\end{equation} 
In our experiments, we considered both the absolute loss and the standard squared loss. 
\end{remark}

\subsection{Unsupervised PCR}\label{ss:PCR}

 The analysis above assumed knowledge of \(g_{2}\), or equivalently of the minimal attainable MSE\ of the regression problem at hand. Clearly, this would seldom be known to the practitioner. Further, any guess of \(g_{2} \in [0,\var(Y)]\) gives a valid solution. Specifically, let $\hat{\ab}(q)$ be 
the solution of (\ref{eq:a_L}) with an assumed value \(g_{2}=q\). 
Then, due to the additive structure inside the parenthesis in Eq. (\ref{eq:a_L}), regardless of the loss function $L$, we have 
$\hat{\ab}(q) = \hat{\ab}(0)-\frac{q}2 {\1}$ where 
$\1=(1,\ldots,1)^T\in\mathbb{R}^m$. Similarly,  by Eq. (\ref{eq:rho_g_ai}),
\begin{equation}
\hat{\rb}(q) = \hat{\rb}(0)+\frac{q}2\, {\1}\,.
        \label{eq:rho_q}
\end{equation}
What is needed is thus a {\em model selection} criterion that would be able to accurately estimate the value of $g_2$, given the family of possible solutions $\hat{\rb}(q)$ for $0\leq q\leq\var(Y)$. 

To motivate our proposed estimator of   \(g_2\), let us first analyze the model of the previous section, but with the additional assumption that all $m$ regressors are fairly close to the optimal conditional mean \(g(x)\). Namely, for analysis purposes, we scale the deviations \(h_i\) by a parameter \(\epsilon\), 
\begin{equation}
f_i(x)=g(x) + \epsilon h_i(x)
        \label{eq:f_epsilon}
\end{equation}
and study the behaviour of various quantities as a function of $\epsilon$. Specifically, under Eq. (\ref{eq:f_epsilon}), the population covariance of the $m$ regressors takes the form
\[
C(\epsilon) = g_2 \1 \1^T+\epsilon(\ab\1^T + \1 \ab^T)+\epsilon^2 D
\]
where $a_i=\E[h_i(X)Y]$ and $D$ is a diagonal matrix with entries $D_{ii}=\E[h_i^2(X)]$.
The following lemma characterizes the leading eigenvalue and eigenvector of \(C\), as \(\epsilon\to 0\).

\begin{lemma} \label{lemma:lambda_v_C}
Let $\lambda_{1}(\epsilon), \mathbf v_{1}(\epsilon)$ be the largest eigenvalue and corresponding  eigenvector of $C(\epsilon)$. Then, as $\epsilon\to 0$,  
\begin{eqnarray}
        \label{eq:lambda_epsilon}
       \lambda_{1}(\epsilon) &= & g_2 m +(2\ab^T \1)\cdot \epsilon  + O(\epsilon^2) \\
       {\bf v}_{1}(\epsilon) & = & g_{2}\1 +  (\ab -\frac{\ab^T\1}m\1)\cdot\epsilon +  O(\epsilon^2).
        \label{eq:v_epsilon}
\end{eqnarray}
\end{lemma}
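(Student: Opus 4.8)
The plan is to treat $C(\epsilon)$ as an analytic matrix-valued perturbation of the rank-one matrix $C_0 := g_2\1\1^T$ and to extract the leading eigenpair by matching powers of $\epsilon$. Writing $C(\epsilon) = C_0 + \epsilon C_1 + \epsilon^2 C_2$ with $C_1 = \ab\1^T + \1\ab^T$ and $C_2 = D$, the unperturbed matrix $C_0$ has a single nonzero eigenvalue $g_2 m$, with eigenvector proportional to $\1$, while $0$ is an eigenvalue of multiplicity $m-1$ whose eigenspace is $\1^\perp$. Provided $g_2 > 0$, the top eigenvalue is \emph{simple} and separated from the rest by a gap of $g_2 m$; by Rellich's theorem the leading eigenvalue $\lambda_1(\epsilon)$ and a corresponding eigenvector $\mathbf v_1(\epsilon)$ are then real-analytic in $\epsilon$ near $0$, which is exactly what licenses the $O(\epsilon^2)$ expansions claimed in Eqs.~(\ref{eq:lambda_epsilon})--(\ref{eq:v_epsilon}).

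I would then posit $\lambda_1(\epsilon) = \lambda^{(0)} + \epsilon\lambda^{(1)} + O(\epsilon^2)$ and $\mathbf v_1(\epsilon) = \mathbf v^{(0)} + \epsilon\mathbf v^{(1)} + O(\epsilon^2)$ and substitute into $C(\epsilon)\mathbf v_1(\epsilon) = \lambda_1(\epsilon)\mathbf v_1(\epsilon)$. Matching the $\epsilon^0$ term gives $C_0\mathbf v^{(0)} = \lambda^{(0)}\mathbf v^{(0)}$, so $\lambda^{(0)} = g_2 m$ and $\mathbf v^{(0)}\propto\1$; since eigenvectors are defined only up to scale, I fix the scaling by the convention $\mathbf v^{(0)} = g_2\1$, which is what produces the stated leading term. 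Matching the $\epsilon^1$ term yields $(C_0 - \lambda^{(0)} I)\mathbf v^{(1)} = \lambda^{(1)}\mathbf v^{(0)} - C_1\mathbf v^{(0)}$.

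The first-order eigenvalue follows from the Fredholm solvability condition: since $\1$ spans the kernel of the symmetric operator $C_0 - \lambda^{(0)} I$, the right-hand side must be orthogonal to $\1$. Using $\1^T\mathbf v^{(0)} = g_2 m$ and $\1^T C_1\mathbf v^{(0)} = 2 g_2 m\,(\ab^T\1)$, this forces $\lambda^{(1)} = 2\ab^T\1$, giving Eq.~(\ref{eq:lambda_epsilon}). Substituting this back, the right-hand side collapses to $-g_2 m\,(\ab - \tfrac{\ab^T\1}{m}\1)$, which lies in $\1^\perp$; restricted to $\1^\perp$ one has $C_0 = 0$, so there $C_0 - \lambda^{(0)} I = -g_2 m\, I$, and inverting (in the gauge $\mathbf v^{(1)}\perp\1$) gives $\mathbf v^{(1)} = \ab - \tfrac{\ab^T\1}{m}\1$, i.e.\ Eq.~(\ref{eq:v_epsilon}). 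Equivalently, one may write $\mathbf v^{(1)} = \tfrac{1}{g_2 m}\big(I - \tfrac{1}{m}\1\1^T\big)\,C_1\mathbf v^{(0)}$ via the reduced resolvent of $C_0$ on $\1^\perp$.

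The only genuinely delicate point is the justification of the expansion itself, and it rests entirely on the eigenvalue gap: the whole argument presumes $g_2 > 0$, so that $g_2 m$ is a simple eigenvalue bounded away from the degenerate zero eigenvalue uniformly for small $\epsilon$. I would flag this explicitly as a standing assumption; it is harmless in the intended regime, since $g_2 = \E[g(X)^2]$ is the explained variance and is strictly positive whenever the response is not independent of the features. Once simplicity is secured, everything else is routine linear-algebra bookkeeping, with the $O(\epsilon^2)$ remainders controlled by the analyticity of the isolated eigenpair.
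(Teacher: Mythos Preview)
Your proof is correct and follows essentially the same route as the paper's: both justify analyticity of the isolated top eigenpair via standard perturbation theory (the paper cites Kato, you invoke Rellich), expand in powers of $\epsilon$, extract $\lambda^{(1)}$ by projecting onto $\1$ (what you call the Fredholm solvability condition, what the paper does by left-multiplying by $\mathbf v_0^T$), and then solve for $\mathbf v^{(1)}$ in the gauge orthogonal to $\1$. Your explicit flagging of the $g_2>0$ gap hypothesis and the reduced-resolvent formulation are nice clarifications, but the argument is the same.
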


Several insights can be gained from this lemma. First, at 
$\epsilon=0$ the matrix $C(\epsilon=0)=g_2 \1 \1^T$ is rank one with a single non-zero eigenvector ${\bf v}_{1}=\1$ and corresponding eigenvalue $\lambda_{1}=g_2 m$. Hence, if the \(m\) regressors are all very close  to \(g(x)\), 
their population matrix $C$ is nearly rank one and very ill conditioned. Even with an accurate estimate of $g_2$ and consequently of $\hat\rb$, inverting Eq. (\ref{eq:p=Cw*}) to estimate  $\hat{\bf w}=(\hat C)^{-1}\hat\rb$ would then be extremely unstable. 

Second, under the model (\ref{eq:f_epsilon}), $\rb=g_2\1+\epsilon\ab$. Comparing this  to Eq. (\ref{eq:v_epsilon}),   the vector $\rb$ and the leading eigenvector ${\bf v}_1$, properly scaled, are \textit{nearly identical}, up to a small shift by $(\frac1m\sum a_i) \epsilon$ and up to $O(\epsilon^2)$ terms. Moreover, up to \(O(\epsilon^2)\) terms, the matrix \(C(\epsilon)\) has rank two, spanned by the two vectors \(\1\) and \(\ab\).
Hence, up to $O(\epsilon^2)$ terms, the true vector $\rb$ can be written as a linear combination of the first two eigenvectors of $C$.
Thus, even though the matrix $C$ is ill conditioned, a principal component approach, with just $K=1$ or 2 components, can provide an excellent approximation of the optimal weight vector $\bf w^*$.  While our focus is on unsupervised ensemble, this analysis provides a rigorous theoretical support
for the PCR* method of \citet{Merz1999}, a result which may be of independent interest for supervised ensemble learning.

Third, since by Eq.(\ref{eq:MSE_f}), \(\mbox{MSE}(f_i)=g_2-2a_i \epsilon  +O(\epsilon^2)\), the worst and best regressors may be detected by the largest and smallest entries in  ${\bf v}_1$ or in the estimated vector $\hat\rb$. 

Lemma \ref{lemma:lambda_v_C} suggests several ways to estimate the unknown quantity \(g_2\). By Eq. (\ref{eq:lambda_epsilon}), one option is  \(\hat g_2=\lambda_{1}/m\). Under our assumed model, this would incur an error $(2\sum a_j)\epsilon+O(\epsilon^2)$. Another option, which we found works better in practice is to consider the relation between $\hat{\rb}(q)$ and the top eigenvector ${\bf v}_{1}$ of $C$, normalized to 
$\|{\bf v}_1\|=1$. Specifically, we  estimate $g_2$ by minimizing the following residual,  
\begin{equation}
\hat g_2 \!=\!\argmin_{q\in[0,\var(Y)]} \mbox{RES}(q) \!=\! 
\argmin_{q} 
\frac{\|\hat{\rb}(q)-\left({\bf v}_1^T\hat{\rb}(q)\right){\bf v}_1\|}
{\|\hat{\rb}(q)\|}
        \label{eq:g2_hat}
\end{equation}
where \(\hat\rb(q)\) is given in Eq. (\ref{eq:rho_q}). From the estimate  $\hat g_2$, the weight vector 
of U-PCR is
\begin{equation}
{\bf w}^{\mbox{\tiny U-PCR}} = \frac{1}{\lambda_1}({\bf v}_1^T\hat\rb(\hat g_2)){\bf\, v}_1
        \label{eq:w_UPCR}
\end{equation} 

A sketch of our proposed scheme appears in Algorithm \ref{alg:main_flow}. 
\begin{algorithm}[tb]
   \caption{Sketch of U-PCR }
   \label{alg:main_flow}
\begin{algorithmic}
        \STATE {\bfseries Input:} Predictions $f_i(x_j), \E[Y]$ 
              and $\var(Y)$ 
        \STATE Compute covariance \(\hat{C}\) and its leading eigenvector ${\bf v}_1$ 
        \STATE For $q\in[0,\var(Y)]$, compute $\hat\rb(q)$ by  Eqs. (\ref{eq:rho_g_ai}), (\ref{eq:a_L}) and (\ref{eq:rho_q})
        \STATE Estimate $g_2$ via Eq. (\ref{eq:g2_hat}). 
        \STATE Set $\rb=\hat\rb(\hat g_2)$ and $\rho_{\max}=\max \rho_i$
        \IF{$\hat g_2 < \epsilon_{L}\cdot\var(Y)$} 
        \STATE Difficult prediction problem; STOP
        \ENDIF
        \STATE Exclude experts with $\rho_i<0.05\,\var(Y)$ 
               or $\rho_i<\rho_{\max}/3$ 
        \STATE Recalculate $\mathbf v_1, \hat{\boldsymbol\rho}(q),\hat g_2 $              on remaining experts
        \STATE {\bf Output:} Weight vector $\hat{\mathbf w}$ of Eq. (\ref{eq:w_UPCR})
        \end{algorithmic}
\end{algorithm}

\begin{figure*}[t]
    \subfigure[CCPP: Accurate prediction possible]
        {\includegraphics[width=.32\textwidth]{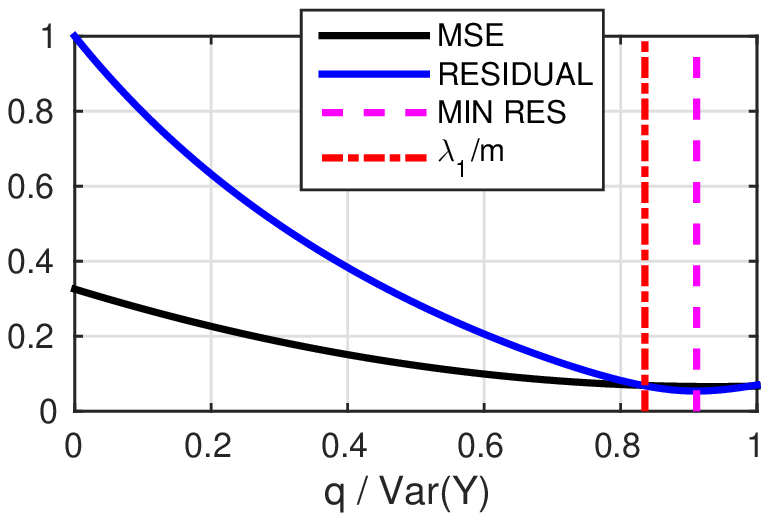}} %
    \subfigure[Basketball: Challenging task]{\includegraphics[width=.32\textwidth]{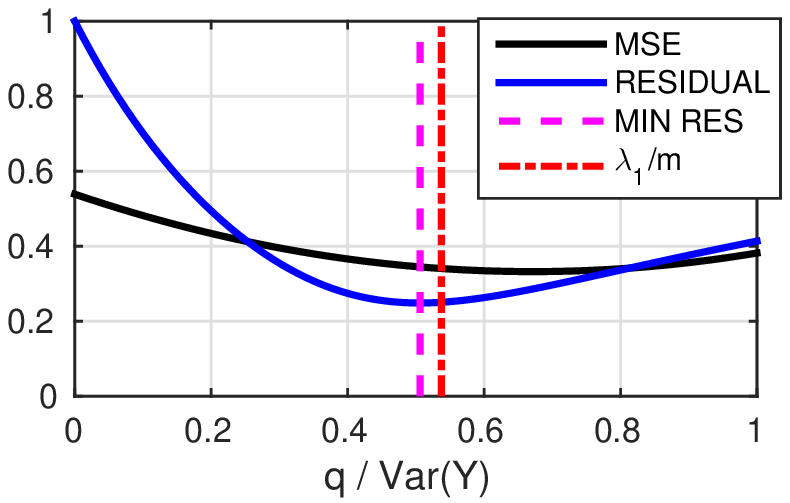}} %
    \subfigure[Affairs: Limited information on response]{\includegraphics[width=.32\textwidth]{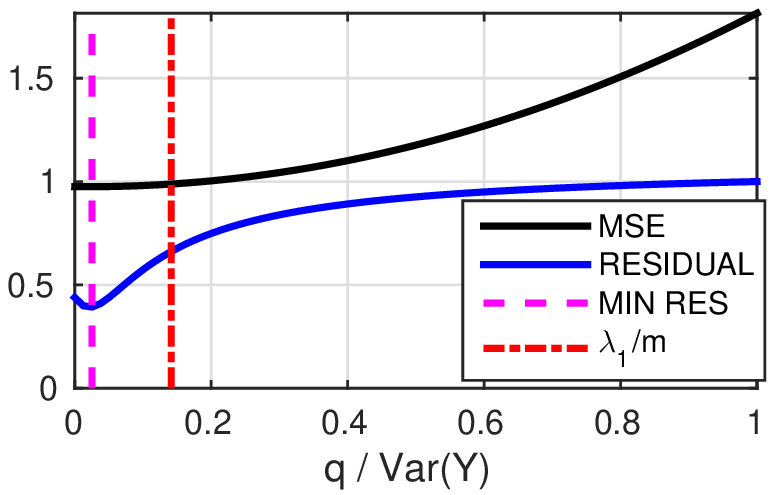}}
    \caption{Plots of \(\mbox{MSE}(q)\) and \(\mbox{RES}(q)\) for problems of  easy, moderate and hard difficulty levels.  The vertical lines are the estimated $\hat g_2$ at which the residual is minimal and \(\lambda_1/m\).}
    \label{fig:g2_estimation}
\end{figure*}

\subsection{Practical Issues}

Before illustrating the competitive performance of U-PCR, we discuss several important   
practical issues that need to be addressed when handling real-world  ensembles, whose individual regressors may not satisfy our assumptions. 

First, when the true value $g_2\ll\var(Y)$, the regression problem at hand is very difficult, and no linear combination of the \(m\) predictors can give a small error. If our estimated $\hat g_2/\var(Y)\le \epsilon_L$ for some small threshold $\epsilon_L$, say 0.1, this is an indication of such a difficult problem. In this case we stop and do not attempt to construct an ensemble learner. 

Second, even when accurate prediction is possible, in our experience, if some regressors are far less accurate than others, then it is important to detect them and {\em exclude} them from the ensemble, and recompute the various quantities after their removal. However, in the rare cases that after this removal  only \(m\leq 4\) regressors remained, then we found it better to compute their simple average instead of Eq. (\ref{eq:w_UPCR}).  

Finally,  if the second eigenvalue is not
extremely small, then it is beneficial to project the vector \(\hat{\rb}\) onto the first two eigenvectors of \(\hat C\) .  In our experiments
we did so when  $\lambda_2>0.1 \cdot \mbox{Trace}(\hat C)$. Then, Eq. (\ref{eq:w_UPCR})
is replaced by 
\[
{\bf w}^{\mbox{\tiny U-PCR}} = 
\frac{1}{\lambda_1}({\bf v}_1^T\hat\rb(\hat g_2)){\bf\, v}_1 + 
\frac{1}{\lambda_2}({\bf v}_2^T\hat\rb(\hat g_2)){\bf\, v}_2\,.  
\]

\section{Experiments}\label{sec:experiments}

We illustrate the performance of U-PCR on various real world regression problems. These include problems for which we trained multiple regression algorithms, and  two applications where the regressors were constructed by a third party and only their predictions were given to us.

We compare U-PCR to the ensemble mean and median as well as to a linear oracle regressor of the form (\ref{eq:unbiased_ensemble_learner}), which has access to all the response values $y_{j}$. It determines its weights by ordinary least squares over all $n$ samples 
\begin{equation} \label{eq:linear_ensemble_oracle}
        {\bf w}_{\mbox{\tiny or}} = (Z Z^T)^{-1} Z \cdot (\mathbf y - \muY) \ .
\end{equation}
We denote the normalized MSE 
of the oracle 
by $\delta_{\mbox{\tiny or}} = \mbox{MSE}({\bf w}_{\mbox{\tiny or}})/\var(Y)$.

We divide the regression problems into three difficulty levels:   (i) $\dor \lesssim 0.1$, where accurate prediction is possible by a linear combination of the $m$ regressors; (ii) $0.1 \lesssim \dor \lesssim 0.8$, a challenging regression task; and (iii) $\dor \gtrsim 0.8$, where the \(m\) experts provide very little, if any information on $Y$. 

We start with the following basic question: Given only $f_i(x_j)$ and the first two moments of $Y$, can we roughly estimate the difficulty level of our problem? If it belongs to level (i) or (ii), is it possible to detect the most accurate or least accurate regressors? Finally, can we construct a linear combination at least as accurate as the mean or median?


\begin{figure*} 
        \centering
            \subfigure[Before outlier removal]{\includegraphics[width=.24\linewidth]{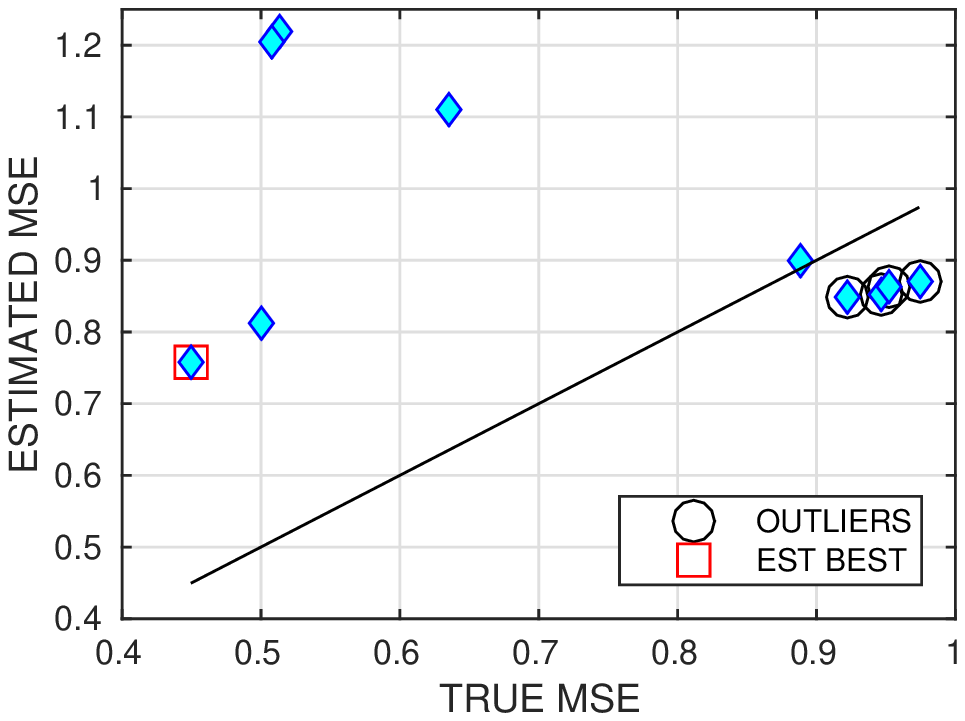}} 
            \subfigure[After outlier removal] {\includegraphics[width=.24\linewidth]{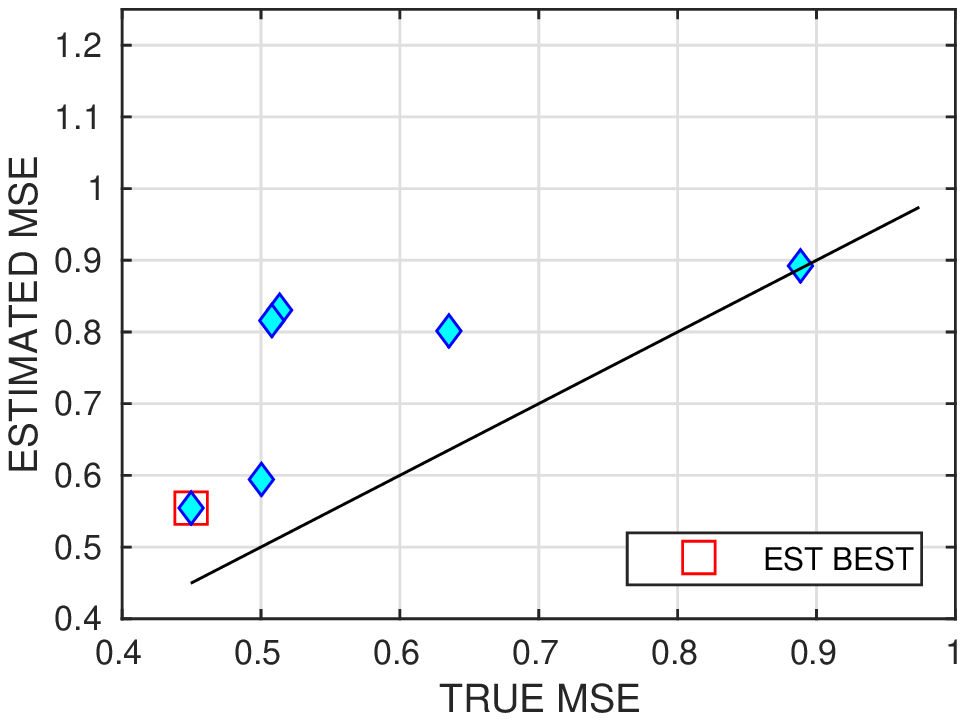}}         
        \subfigure[Before outlier removal]{\includegraphics[width=.24\linewidth]{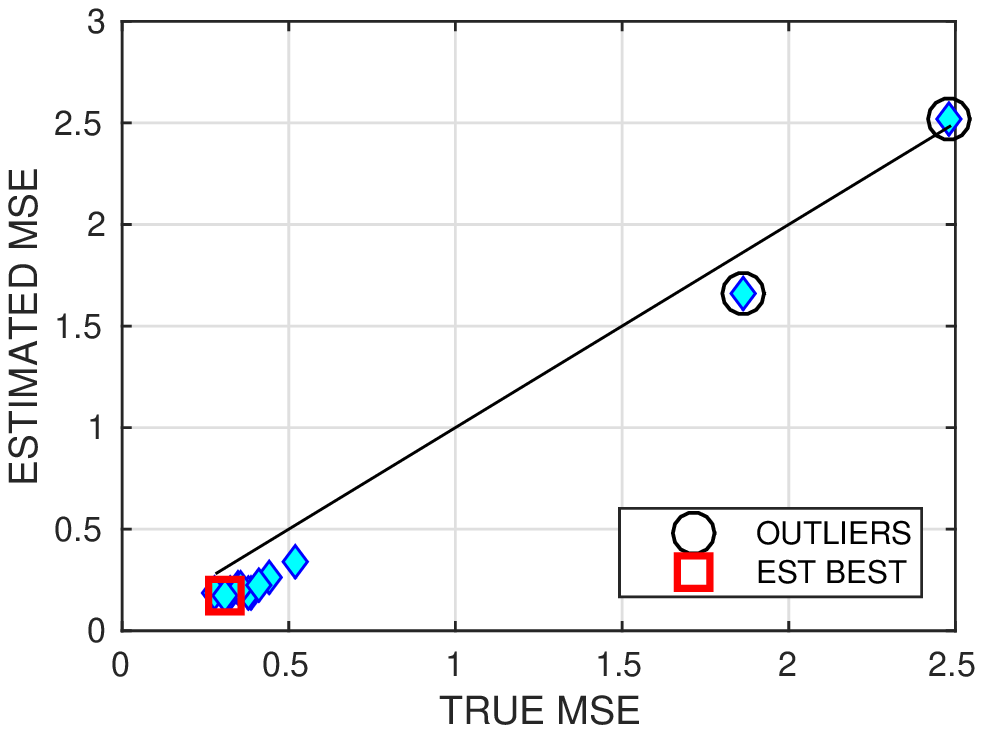}}
        \subfigure[After outlier removal] {\includegraphics[width=.24\linewidth]{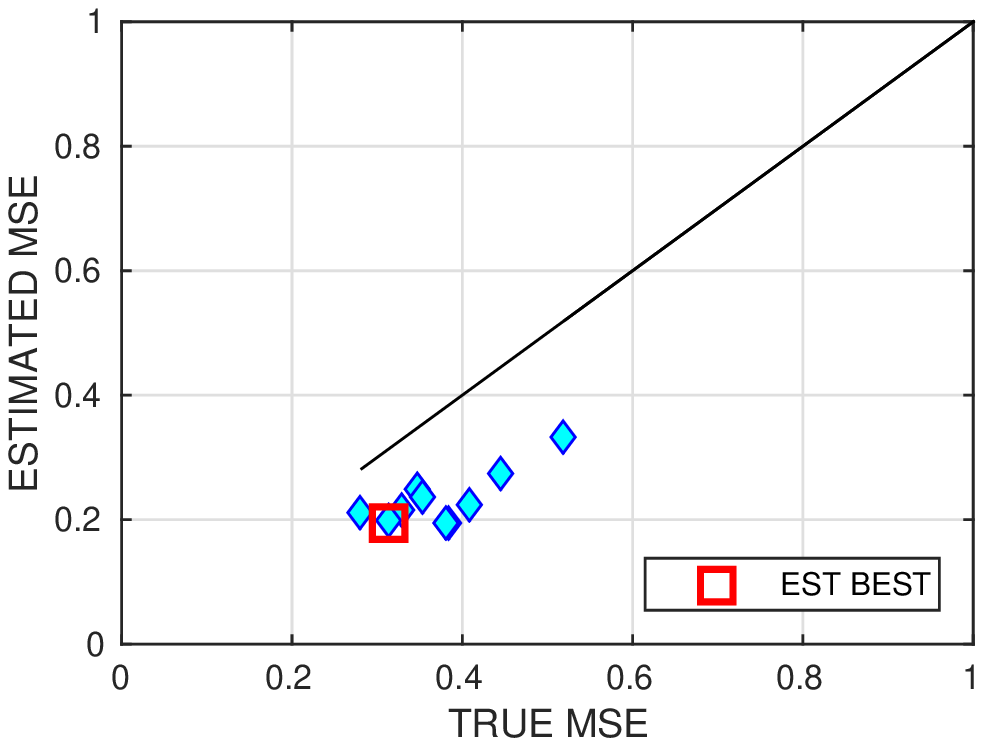}}             
        \caption{Estimated MSE(\(f_{i})\) vs. true MSE for 
                        Flights AUS (left panels) and  UACC812 protein (right panels) before and after outlier removal. The outlier removal scheme is not based on the estimated MSE, but rather as described in main text, on the entries of the estimated $\hat\rb$. In some datasets, such as Flights AUS, recalculation after this removal gives more accurate estimates of regressors' MSE. }
                \label{fig:outlier_detection} 
\end{figure*}

\subsection{Manually Crafted Ensembles}
With precise details appearing in the supplement, we considered  18 different prediction tasks, including energy output prediction in a power plant, flight delays, basketball scoring and more. Each dataset was randomly split into \(n_{\mbox{\tiny train}}\) samples used to train 10 different regression algorithms and remaining \(n\) samples to construct the observations \(f_i(x_j)\), see Table \ref{table:datasets}
in Supplementary. The regressors included
 Ridge Regression, SVR, Kernel Regression and Decision Trees, among others. 

Table \ref{table:MSE_ENSEMBLE} in the supplement shows the MSE of U-PCR, mean and median averaged over 20 repetitions, each with different random splits into train and test samples. On several datasets, U-PCR obtained a significantly lower MSE.   
With further details in the supplement, here we highlight some of  our key results. We start by estimating $g_2$ and classifying the problems by difficulty level. Fig. \ref{fig:g2_estimation} shows this estimation procedure on three datasets. The $x$-axis is the  value of \(q$ normalized by $\var(Y).\)
The black curve is the unobserved MSE$(q)$ obtained by the weight vector of Eq. (\ref{eq:w_UPCR}), with assumed
$\hat\rb(q)$. The red curve is the computed residual $\mbox{RES}(q)$ of Eq. (\ref{eq:g2_hat}) and the vertical line is the estimated \(\hat g_2\). Our approach is indeed able to correctly detect the difficulty levels of these problems and estimate a value \(\hat g_2\), whose corresponding MSE\ is not too far from the minimal achievable by using any of the \(\hat\rb(q)\).  
Fig. \ref{fig:rho_estimation} in the supplement shows the  estimated $\hat\rb$ vs. the true \(\rb\). For easy problems with $g_2\ll\var(Y)$ the agreement is remarkable. 

Next, we evaluated the ability to detect the most accurate regressor in the ensemble. We measured the excess risk of selecting the regressor with the smallest estimated MSE,  compared to the best regressor, which is unknown. Additionally, we measured the excess risk in selecting the single regressor with the greatest corresponding entry in the leading eigenvector of $C$. The details are given in Table \ref{table:excess_risk} of the Supplementary. Our experiments show that in most cases choosing the predictor with lowest estimated MSE outperforms the one with largest entry in ${\bf v}_1$.

 Fig. \ref{fig:outlier_detection} shows  the effectiveness of detecting inaccurate regressors, by pruning those whose entries $\hat\rho_i < \rho_{\max}/3$ or $\hat\rho_i < 0.05\var(Y)$. Finally, Fig. \ref{fig:cat1_comparison} illustrates the advantages of U-PCR over the mean and median, on problems of easy to moderate difficulty.

\begin{figure*}[ht] 
        \centering
        \subfigure[Nearly Perfect Prediction]
                {\includegraphics[width=.4\linewidth]{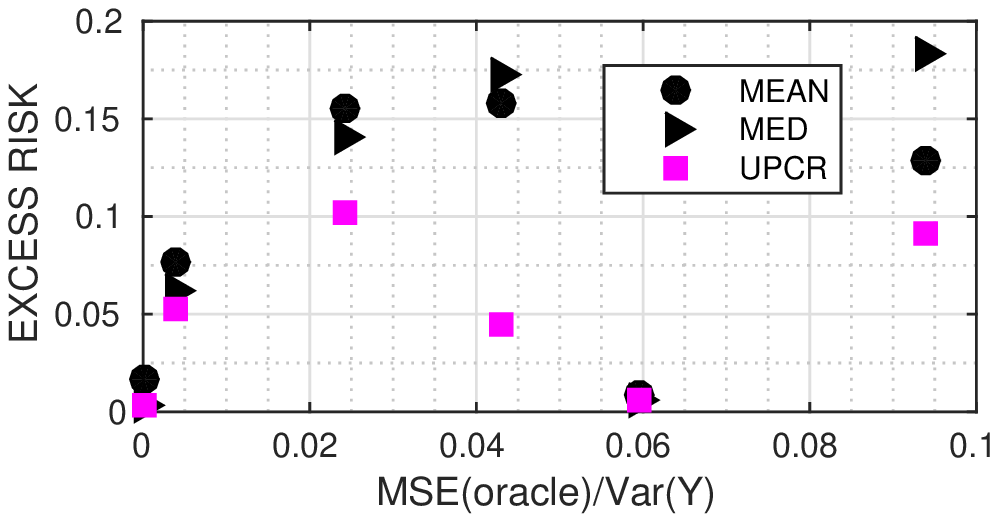}}         \subfigure[Challenging Problems]{\includegraphics[width=0.4\linewidth]{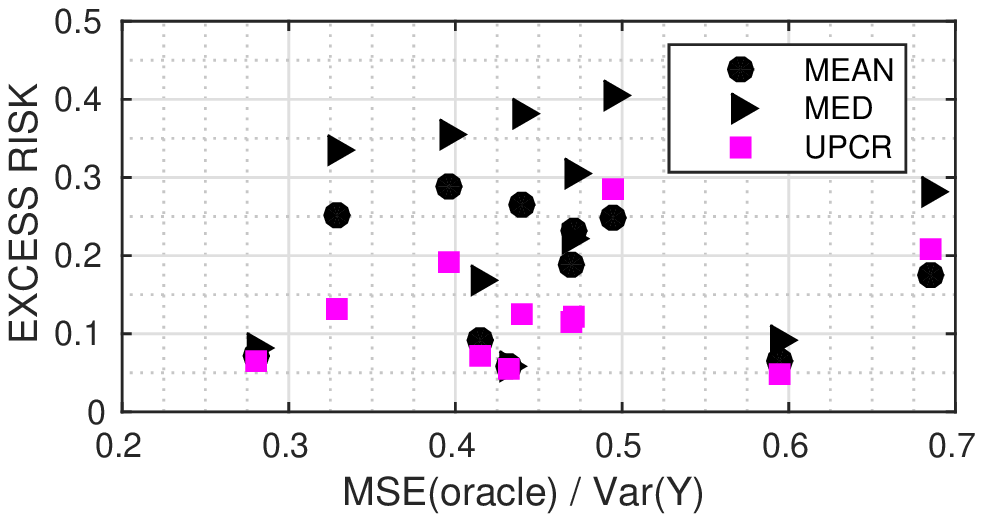}}
        \caption{Excess risk MSE(ensemble)$-$MSE(oracle), divided by $\var(Y)$ for easy problems (left) and challenging ones (right).}
                \label{fig:cat1_comparison}  
\end{figure*}

%
%
\subsection{HPN-DREAM Challenge Experiment}
Next, we consider real world problems where the ensemble regressors were constructed by a third party. The first problem came from the HPN-DREAM breast cancer network inference challenge \cite{hill2016inferring}. Here, participants were asked to predict the time varying concentrations of 4 proteins after the introduction of an inhibitor. We were given the predictions of \(m=12\) models on \(n\approx 2500\) instances.  We constructed a separate U-PCR model for each protein. Fig. \ref{fig:outlier_detection} demonstrates the success of our method  in detecting accurate regressors and removing inaccurate ones.
 Fig. \ref{fig:dream8_accuracy} shows that U-PCR outperformed the mean and median on 3 of the 4 proteins.
We note that for all four proteins, the single best model had comparable MSE to U-PCR, however, this model is unknown. For three of the four proteins U-PCR had smaller MSE than that of the single model estimated as being the most accurate.   

\begin{figure}[h] 
        \centering
        \includegraphics[width=.7\linewidth]{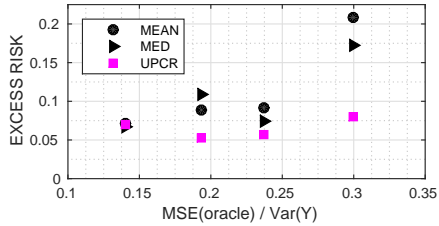} 
        \caption{HPN-DREAM Challenge Accuracy.}
                \label{fig:dream8_accuracy}  
\end{figure}

\subsection{Bounding Box Experiment}

Here we were given the predictions of
6 deep learning models trained by Seematics Inc., 
on the location of physical objects in images. The models were trained on the PASCAL Visual Object Classes dataset  \cite{pascal-voc-2012}, whereas the predictions were made on images from COCO\ dataset \cite{lin2014microsoft}.
We focused on three object classes \{person, dog, cat\}, with each neural network providing four coordinates for the bounding box: $(x_1,y_1)$ and $(x_2,y_2)$. We used U-PCR as an ensemble predictor each coordinate separately, with the mean squared error as out measure of accuracy. An example of the MSE estimation by our method can be seen in Fig. \ref{fig:bbox_mse_estimation}, and the accuracy for all object classes and all coordinates in Fig.  \ref{fig:bbox_accuracy}.
Results on few   images   are in the supplementary. 
\begin{figure}[ht] 
        \centering
        \includegraphics[width=.7\linewidth]{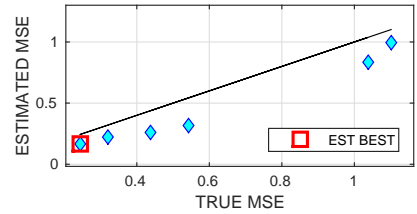} 
        \caption{MSE estimation for class Cat, coordinate $x_1$}
                \label{fig:bbox_mse_estimation}  
\end{figure}
\begin{figure}[ht] 
        \centering
        \includegraphics[width=.7\linewidth]{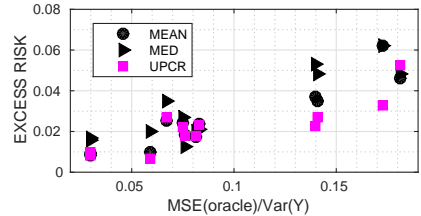} 
        \caption{Bounding box prediction accuracies}
                \label{fig:bbox_accuracy}  
\end{figure}

\section{Summary and Discussion}\label{sec:summary}

In this paper we tackled the problem of unsupervised ensemble regression. We presented a framework to explore this problem, based on an independent error assumption. We proposed methods, together with theoretical support, to  detect the best and worst regressors and to linearly aggregate them, all in an unsupervised manner. As our theoretical analysis in Section \ref{sec:unsupervised_ensemble_regression} showed, unsupervised ensemble regression is different from the well studied
problem of unsupervised ensemble classification, and  required different approaches to its solution.

Our work raises several questions. One of them is how
to extend our method  to a semi-supervised setting, in which there is also a limited amount of labeled data. 
It is also interesting to theoretically understand the relative benefits
of labeled versus unlabeled data for ensemble learning. 

Another direction for future research is to replace the strict independent error assumption by  more complicated yet realistic models for dependencies between the regressors. In the context of
unsupervised classification, \citet{fetaya2016unsupervised} relaxed
the conditional independence model of Dawid and Skene by  introducing an intermediate layer of latent variables. Instead of a rank-one off diagonal covariance, the matrix \(C\) in their model had a low rank structure, which the authors learned by a spectral method. It is interesting whether a similar approach can be
developed for an ensemble of regressors. 

\section*{Acknowledgments}
The authors thank Moshe Guttmann and the Seematics team for their
help. This research was funded by the Intel Collaborative
Research Institute for Computational Intelligence (B.N.)
and by NIH grant 1R01HG008383-01A1 (Y.K. and B.N.).

\bibliography{mybib}
\bibliographystyle{apalike}


\newpage

\appendix

\onecolumn
%
\centerline{\Large \bf Supplementary Material}

\section{Proofs}

\begin{proof}[Proof of Lemma \ref{lemma:lambda_v_C}]
The proof follows a perturbation approach similar to the one outlined in \citet{nadler2008finite}. Since $C(\epsilon)$ is symmetric and quadratic in $\epsilon$, classical results on perturbation theory \cite{Kato} imply that in a small neighborhood of $\epsilon=0$, the leading eigenvalue and eigenvector are analytic in $\epsilon$. We may thus expand them in a Taylor series, 
\begin{eqnarray*}
\lambda(\epsilon)&=&\lambda_0 + \lambda_1\epsilon  + \lambda_2\epsilon^2 + \ldots \\
\bf v(\epsilon)& = & {\bf v}_0 + {\bf v}_1 \epsilon + {\bf v}_2 \epsilon^2 + \ldots 
\end{eqnarray*}
We insert this expansion into the eigenvector equation $C(\epsilon){\bf v}(\epsilon)=\lambda(\epsilon){\bf v}(\epsilon)$ and solve the resulting equations at increasing powers of $\epsilon$. 

The leading order equation reads $g_2\1\1^T{\bf v}_0 = \lambda_0{\bf v}_0$, which gives ${\bf v}_0\propto \1$ and $\lambda_0=g_2 \|\1\|^2=g_2m$. Since the eigenvector ${\bf v}(\epsilon)$ is defined only up to a multiplicative factor, we conveniently chose it to be that  $\1^T{\bf v}(\epsilon)=g_2 m$ holds for all $\epsilon$. 
This gives ${\bf v}_0=g_2\1$ and ${\bf v}_1^T{\bf v}_0=0$. 

The $O(\epsilon)$ equation reads
\begin{equation}
g_2 \1\1^T {\bf v}_1 + (\ab \1^T+\1\ab^T) {\bf v}_0 = \lambda_0 {\bf v}_1 + \lambda_1 {\bf v}_0.
        \label{eq:O_epsilon}
\end{equation}
Multiplying this equation from the left by ${\bf v}_0^T$ gives
\[
2({\bf v}_0^T\1)(\ab^{T}{\bf v}_0) = \lambda_1 \|{\bf v}_0\|^2  
\]
or $\lambda_1=2\sum a_j$. Thus, Eq. (\ref{eq:lambda_epsilon}) follows. 
Inserting the expression for $\lambda_1$ back into Eq. (\ref{eq:O_epsilon}) gives
\[
{\bf v}_1 = \frac1{\lambda_0}[(\ab^T{\bf v}_0)\1 + (\1^T{\bf v}_0)\ab -(2\sum_j a_j) {\bf v}_0]
\]
from which Eq. (\ref{eq:v_epsilon}) readily follows.
\end{proof}

\section{Datasets \& Results}

\begin{table*}[t]
\vskip 0.1in 
\caption{Mean squared error of different ensemble methods, normalized by $\var(Y).$ On the Affairs data U-PCR estimates it is a difficult problem and does not predict outcomes. Numbers in bold represent cases where one of the unsupervised ensemble regressors was significantly better than the others. }
\label{table:misc_results}
\vskip 0.1in 
\begin{center}
\begin{small}
\begin{sc}
\begin{tabular}{|l||c|c|c|c|}\hline \hline
Dataset                 & Oracle        & U-PCR & Mean & Median \\ \hline
             Abalone &   0.43 ($\pm 0.01$) &     0.49 ($\pm 0.01$) &     0.49 ($\pm 0.01$)&       0.49 ($\pm 0.01$)  \\ \hline
             Affairs &   0.92 ($\pm 0.00$) &     N.A. &     0.96 ($\pm 0.01$)&       0.94 ($\pm 0.00$)  \\ \hline
          Basketball &   0.28 ($\pm 0.01$) &     0.35 ($\pm 0.01$) &     0.35 ($\pm 0.00$)&       0.36 ($\pm 0.00$)  \\ \hline
        Bike sharing &   0.00 ($\pm 0.00$) &     0.00 ($\pm 0.00$) &     0.02 ($\pm 0.00$)&       0.00 ($\pm 0.00$)  \\ \hline
       Blog feedback &   0.41 ($\pm 0.03$) &     0.49 ($\pm 0.02$) &     0.50 ($\pm 0.02$)&       0.58 ($\pm 0.02$)  \\ \hline
                Ccpp &   0.06 ($\pm 0.00$) &     0.07 ($\pm 0.00$) &     0.07 ($\pm 0.00$)&       0.07 ($\pm 0.00$)  \\ \hline
         Flights AUS &   0.33 ($\pm 0.04$) &     {\bf 0.46} ($\pm 0.07$) &     0.58 ($\pm 0.06$)&       0.66 ($\pm 0.08$)  \\ \hline
         Flights BOS &   0.47 ($\pm 0.04$) &     \bf 0.58 ($\pm 0.08$) &     0.66 ($\pm 0.03$)&       0.69 ($\pm 0.08$)  \\ \hline
         Flights BWI &   0.44 ($\pm 0.06$) &     \bf 0.56 ($\pm 0.09$) &     0.71 ($\pm 0.03$)&       0.82 ($\pm 0.08$)  \\ \hline
         Flights HOU &   0.40 ($\pm 0.09$) &     \bf 0.59 ($\pm 0.07$) &     0.69 ($\pm 0.03$)&       0.75 ($\pm 0.08$)  \\ \hline
         Flights JFK &   0.50 ($\pm 0.05$) &     0.78 ($\pm 0.21$) &     \bf 0.74 ($\pm 0.03$)&       0.90 ($\pm 0.04$)  \\ \hline
         Flights LGA &   0.47 ($\pm 0.04$) &     \bf 0.59 ($\pm 0.06$) &     0.70 ($\pm 0.03$)&       0.78 ($\pm 0.09$)  \\ \hline
    Flights longhaul &   0.69 ($\pm 0.05$) &     0.89 ($\pm 0.24$) &     0.86 ($\pm 0.06$)&       0.97 ($\pm 0.01$)  \\ \hline
           Friedman1 &   0.02 ($\pm 0.00$) &     \bf 0.13 ($\pm 0.00$) &     0.18 ($\pm 0.01$)&       0.16 ($\pm 0.01$)  \\ \hline
           Friedman2 &   0.00 ($\pm 0.00$) &     \bf 0.06 ($\pm 0.01$) &     0.08 ($\pm 0.01$)&       0.07 ($\pm 0.01$)  \\ \hline
           Friedman3 &   0.04 ($\pm 0.01$) &     \bf 0.09 ($\pm 0.02$) &     0.20 ($\pm 0.02$)&       0.22 ($\pm 0.02$)  \\ \hline
       Online videos &   0.09 ($\pm 0.01$) &     \bf 0.18 ($\pm 0.01$) &     0.22 ($\pm 0.01$)&       0.28 ($\pm 0.02$)  \\ \hline
  Wine quality white &   0.60 ($\pm 0.01$) &     \bf 0.64 ($\pm 0.01$) &     0.66 ($\pm 0.01$)&       0.69 ($\pm 0.01$)  \\ \hline
\end{tabular}
\end{sc}
\end{small}
\end{center}
\vskip -0.1in
        \label{table:MSE_ENSEMBLE}
\end{table*}

\subsection{Selecting a Single Regressor}

Table \ref{table:excess_risk} compares the MSE of the single regressor estimated to be the most accurate, versus  the MSE\ of the single best regressor, which is unknown in this setup. The following two methods were compared: (i) Selecting the regressor with the maximum entry $\hat\rho_i$, and (ii) selecting the regressor with the minimal estimated MSE. The experiments were repeated 20 times for each dataset, mean and standard deviations are reported. All values are normalized by $\var(Y)$ for fair comparison.

\begin{table*}[t]
\vskip 0.1in 
\caption{MSE of the single best estimated regressor}
\label{table:excess_risk}
\vskip 0.1in 
\begin{center}
\begin{small}
\begin{sc}
\begin{tabular}{|l||c|c|c|c|}\hline \hline
\textbf{Dataset} & \textbf{Oracle MSE}  & \textbf{Best Regressor MSE} & \textbf{MSE} of $\argmin_i \widehat{\mbox{MSE}_i} $  & \textbf{MSE} of $\argmax_i \hat\rho_i $ \\\hline
             Abalone &   0.43 ($\pm 0.01$) &     0.45 ($\pm 0.01$) &     \textbf{0.49} ($\pm 0.03$)&      0.77 ($\pm 0.21$)  \\ \hline
          Basketball &   0.28 ($\pm 0.01$) &     0.32 ($\pm 0.01$) &     \textbf{0.36} ($\pm 0.01$)&      0.49 ($\pm 0.13$)  \\ \hline
        Bike sharing &   0.00 ($\pm 0.00$) &     0.00 ($\pm 0.00$) &     0.00 ($\pm 0.00$)&       0.00 ($\pm 0.00$)  \\ \hline
       Blog feedback &   0.41 ($\pm 0.03$) &     0.43 ($\pm 0.03$) &     0.66 ($\pm 0.02$)&       0.62 ($\pm 0.19$)  \\ \hline
                CCPP &   0.06 ($\pm 0.00$) &     0.07 ($\pm 0.00$) &     \textbf{0.07} ($\pm 0.00$)&      0.09 ($\pm 0.02$)  \\ \hline
         Flights AUS &   0.33 ($\pm 0.04$) &     0.48 ($\pm 0.03$) &     0.56 ($\pm 0.09$)&       0.66 ($\pm 0.24$)  \\ \hline
         Flights BOS &   0.47 ($\pm 0.04$) &     0.53 ($\pm 0.04$) &     \textbf{0.61} ($\pm 0.13$)&      1.04 ($\pm 0.19$)  \\ \hline
         Flights BWI &   0.44 ($\pm 0.06$) &     0.50 ($\pm 0.05$) &     \textbf{0.50} ($\pm 0.05$)&      0.87 ($\pm 0.45$)  \\ \hline
         Flights HOU &   0.40 ($\pm 0.09$) &     0.52 ($\pm 0.06$) &     \textbf{0.53} ($\pm 0.08$)&      1.13 ($\pm 0.45$)  \\ \hline
         Flights JFK &   0.50 ($\pm 0.05$) &     0.54 ($\pm 0.05$) &     \textbf{0.64} ($\pm 0.16$)&      0.95 ($\pm 0.22$)  \\ \hline
         Flights LGA &   0.47 ($\pm 0.04$) &     0.53 ($\pm 0.03$) &     \textbf{0.59} ($\pm 0.12$)&      1.04 ($\pm 0.42$)  \\ \hline
    Flights longhaul &   0.69 ($\pm 0.05$) &     0.74 ($\pm 0.05$) &     \textbf{0.79} ($\pm 0.11$)&      1.17 ($\pm 1.04$)  \\ \hline
           Friedman1 &   0.02 ($\pm 0.00$) &     0.03 ($\pm 0.00$) &     0.24 ($\pm 0.04$)&       \textbf{0.03} ($\pm 0.00$)  \\ \hline
           Friedman2 &   0.00 ($\pm 0.00$) &     0.01 ($\pm 0.00$) &     0.14 ($\pm 0.02$)&       \textbf{0.02} ($\pm 0.03$)  \\ \hline
           Friedman3 &   0.04 ($\pm 0.01$) &     0.07 ($\pm 0.01$) &     0.25 ($\pm 0.21$)&       \textbf{0.14} ($\pm 0.04$)  \\ \hline

       Online videos &   0.09 ($\pm 0.01$) &     0.10 ($\pm 0.01$) &     0.34 ($\pm 0.02$)&       \textbf{0.17} ($\pm 0.03$)  \\ \hline
  Wine quality white &   0.60 ($\pm 0.01$) &     0.62 ($\pm 0.01$) &     \textbf{0.77} ($\pm 0.01$)&      1.12 ($\pm 0.19$)  \\ \hline
\end{tabular}
\end{sc}
\end{small}
\end{center}
\vskip -0.1in
\end{table*}

\subsection{Dataset Descriptions}
Below is a list of the prediction tasks for which we manually trained ensembles with 10 regressors. Table \ref{table:datasets} summarizes the main characteristics of each dataset, and Table \ref{table:misc_results} contains the mean squared errors of the different approaches normalized by $\var(Y)$. The experiments were repeated 20 times, and the mean and standard deviations are reported. We used standard Python packages for the regression algorithms with the following parameters: Ridge ($\alpha=0.5$), Kernel Regression (kernel chosen using cross validation between polynomial, RBF, sigmoid), Lasso ($\alpha=0.1$), Orthogonal Matching Pursuit, Linear SVR ($C=1$), SVR with RBF kernel ($C$ chosen using cross validation out of 0.01, 0.1, 1, 10), Regression Tree (depth 4), Regression Tree (infinite depth), Random Forest (100 trees), and a Bagging Regressor.

\begin{table}[t!]
        \begin{center}
        \begin{small}
  \begin{threeparttable}
        \vskip 0.1in 
        \caption{Prediction Problems}
        \label{table:datasets}
        \vskip 0.1in 
        \begin{sc}

        \begin{tabular}{lrrrrcc}
                        \bf Name & \multicolumn{1}{c}{$n$} & $n_{\mbox{\tiny train}}$ &  \multicolumn{1}{c}{$d$}  & $\overline{\mbox{MSE}}(f)$ & $\min_i \mbox{MSE}(f_i)$ & $\mbox{MSE}_{\mbox{\tiny oracle}}$ \\
                \hline \hline \abovespace
             Abalone &      3277 &          700 &          7 &   0.59 &  0.45 &   0.431 ($\pm 0.006$) \\ 
             Affairs &      5466 &          700 &          7 &   1.08 &  0.93 &   0.922 ($\pm 0.004$) \\ 
          Basketball &     48899 &          900 &          9 &   0.43 &  0.32 &   0.281 ($\pm 0.005$) \\ 
        Bike Sharing &     15579 &         1600 &         16 &   0.07 &  0.00 &   0.000 ($\pm 0.000$) \\ 
       Blog Feedback &     24197 &        28000 &        280 &   0.64 &  0.43 &   0.415 ($\pm 0.026$) \\ 
                CCPP &      8968 &          400 &          4 &   0.10 &  0.07 &   0.059 ($\pm 0.001$) \\ 
         Flights AUS &     47595 &         1000 &         10 &   0.76 &  0.48 &   0.329 ($\pm 0.035$) \\ 
         Flights BOS &    112705 &         1000 &         10 &   0.84 &  0.53 &   0.470 ($\pm 0.042$) \\ 
         Flights BWI &    101665 &         1000 &         10 &   0.85 &  0.50 &   0.440 ($\pm 0.065$) \\ 
         Flights HOU &     53044 &         1000 &         10 &   0.87 &  0.52 &   0.397 ($\pm 0.094$) \\ 
         Flights JFK &    113960 &         1000 &         10 &   0.89 &  0.54 &   0.495 ($\pm 0.051$) \\ 
         Flights LGA &    111911 &         1000 &         10 &   0.86 &  0.53 &   0.471 ($\pm 0.040$) \\ 
   Flights Long Haul &      9393 &         1000 &         10 &   1.00 &  0.73 &   0.686 ($\pm 0.051$) \\ 
           Friedman1 &     18800 &         1000 &         10 &   0.31 &  0.03 &   0.024 ($\pm 0.001$) \\ 
           Friedman2 &     19400 &          400 &          4 &   0.17 &  0.01 &   0.004 ($\pm 0.001$) \\ 
           Friedman3 &     19400 &          400 &          4 &   0.35 &  0.07 &   0.043 ($\pm 0.006$) \\ 
       Online Videos &     66484 &         2100 &         21 &   0.34 &  0.10 &   0.094 ($\pm 0.006$) \\ 
  \belowspace  Wine Quality White &         3598 &         1100 &         11 &    0.79 &  0.62 &  0.595 ($\pm 0.011$) \\ \hline

     \end{tabular}
     \end{sc}
    \begin{tablenotes}
      \footnotesize
      \item $n$ is the number of held-out samples. The input $X$ is $d$ dimensional, and the same $n_{\mbox{\tiny train}}$ random samples were used to train the different algorithms in the ensemble. $\overline{\mbox{MSE}}(f)$ is the average regressor error, $\min_i\mbox{MSE}(f_i)$ is the minimal error achieved by a regressor in the ensemble, and $\mbox{MSE}_{\mbox{\sc \tiny oracle}}$ is the MSE of the oracle, normalized by $\var(Y)$, with its standard deviation in parenthesis. For each dataset the split between train and test was performed 20 times, averages are listed. 
    \end{tablenotes}
  \end{threeparttable}
     \end{small}
     \end{center}
\end{table}

\begin{figure}
    \subfigure[CCPP: Accurate prediction possible]
        {\includegraphics[width=.32\textwidth]{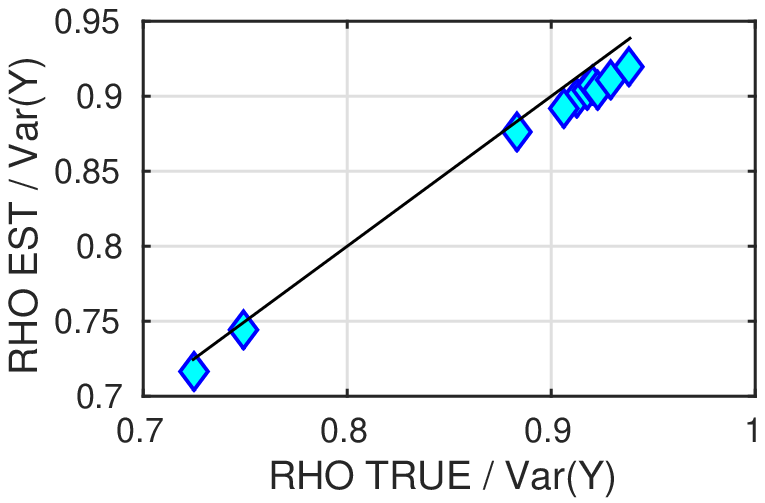}} %
    \subfigure[Basketball: Challenging task]{\includegraphics[width=.32\textwidth]{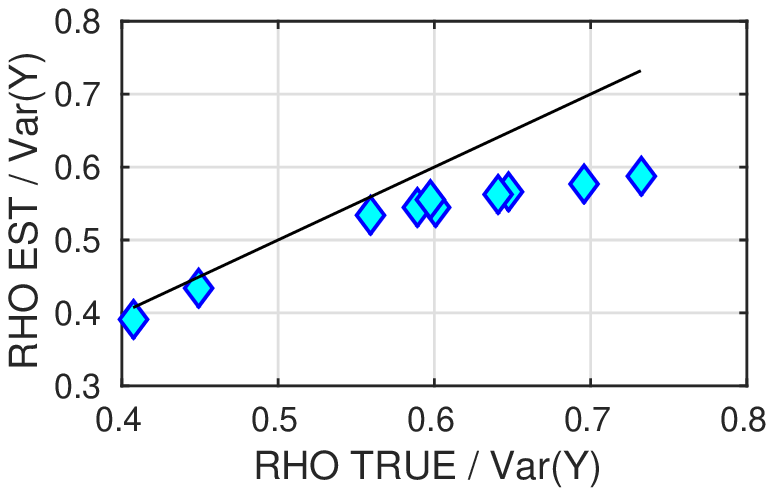}} %
    \subfigure[Affairs: No information on response]{\includegraphics[width=.32\textwidth]{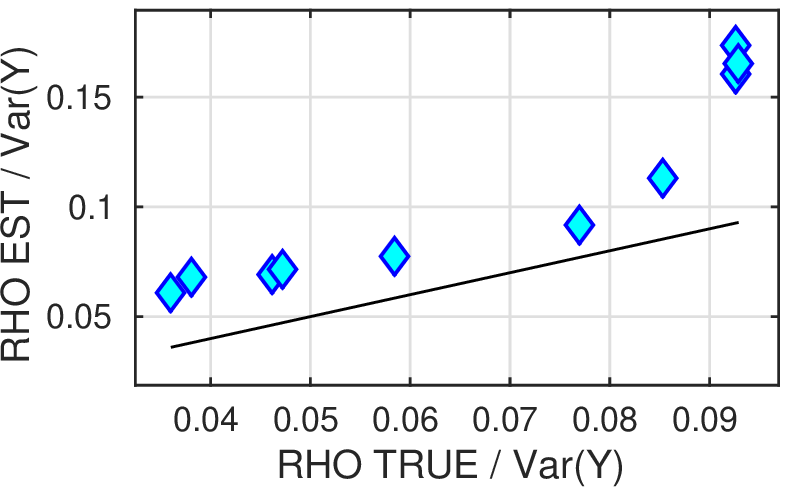}}
    \caption{Estimated $\hat\rb$ vs. true $\rb$ in three regression problems of different difficulty levels}
    \label{fig:rho_estimation}
\end{figure}

\paragraph{Abalone.}
A dataset containing features of abalone, where the goal is to predict its age \cite{Lichman:2013}.
\textcolor{blue}{archive.ics.uci.edu/ml/datasets/Abalone}

\paragraph{Affairs.}
A dataset containing features describing an individual such as time at work, time spent with spouse, and time spent with a paramour. The goal here is to predict the time spent in extramarital affairs.
\textcolor{blue}{statsmodels.sourceforge.net/0.6.0/datasets/generated/fair.html}

\paragraph{Basketball.}
Dataset contains stats on NBA players. Task: Predict number of points scored by the player on the next game.
The features are: \texttt{name}, \texttt{venue}, \texttt{team}, \texttt{date}, \texttt{start}, \texttt{pts\_ma}, \texttt{min\_ma}, \texttt{pts\_ma\_1}, \texttt{min\_ma\_1, pts}, 
where \texttt{start} is whether or not the player started, \texttt{pts} is number of points scored, \texttt{min} is number of minutes played, \texttt{ma} stands for moving average, starts at season, and \texttt{ma\_1} is a moving average with a 1 game lag.

\paragraph{Bike Sharing.}
Bike sharing service statistics, including weather and seasonal information \cite{fanaee2014event}. The prediction task here is the daily and hourly count of bikes rented. \textcolor{blue}{archive.ics.uci.edu/ml/datasets/Bike+Sharing+Dataset}

\paragraph{Blog Feedback.} 
Instances in this dataset contain features extracted from blog posts. The task associated with the data is to predict how many comments the post will receive.

\paragraph{Flights.}
Information on flights from 2008, where the task is to predict the delay upon arrival in minutes.
The features here are the date, day of the week, scheduled and actual departure times, scheduled arrival times, flight ID, tail number, origin, destination, and distance. Due to its size, we split this dataset to flights originating from specific airports (AUS, BOS, BWI, HOU, JFK, and LGA), and long-haul flights.
\textcolor{blue}{stat-computing.org/dataexpo/2009/the-data.html}

\paragraph{CCPP.}
Combined Cycle Power Plant UCI-dataset containing physical characteristics such as temperature and humidity. The task here is to predict the net hourly electrical energy output of the plant.
\textcolor{blue}{archive.ics.uci.edu/ml/datasets/Combined+Cycle+Power+Plant}

\paragraph{Friedman \#1}
Motivated by \citet{Breiman1996}, we used simulated data according to \citet{friedman1991multivariate}. The predictor variables $x_1,\ldots,x_5$ are independent and uniformly distributed over $[0,1]$. The response is  
$$
        y=10\sin(\pi x_1 x_2) + 20 (x_3 -.5)^2 + 10x_4 + 5x_5 + \epsilon
$$
and $\epsilon \sim \mathcal N(0,1)$. 

\paragraph{Friedman \#2}
The second data set tested by \citet{friedman1991multivariate} simulated impedance in an alternating current circuit. Here four predictor variables $x_1,\ldots,x_4$ are uniformly distributed over the ranges $[0,100], [40\pi,560\pi], [0,1]$ and $[1,11]$ respectively. The response was
$$
        y = \sqrt {x_1^2 + (x_2 x_3 - (1/x_2x_4))^2} + \epsilon_2
$$
with $\epsilon_2 \sim \mathcal N(0,\sigma_2^2)$, where the variance was chosen to provide a 3-to-1 signal to noise ratio. 
For the third dataset in this series Friedman \#3, see the original paper \cite{friedman1991multivariate}.

\paragraph{Online Videos.}
YouTube video transcoding dataset. Predict the transcoding time based on parameters of the video.
\textcolor{blue}{archive.ics.uci.edu/ml/datasets/Online+Video+ Characteristics+and+Transcoding+Time+Dataset}

\paragraph{Wine Quality White.}
Predict the quality score (1-10) of white wine based on chemical characteristics, such as acidity and pH level \cite{cortez2009modeling}.
\textcolor{blue}{archive.ics.uci.edu/ml/datasets/Wine+Quality}

\begin{figure}
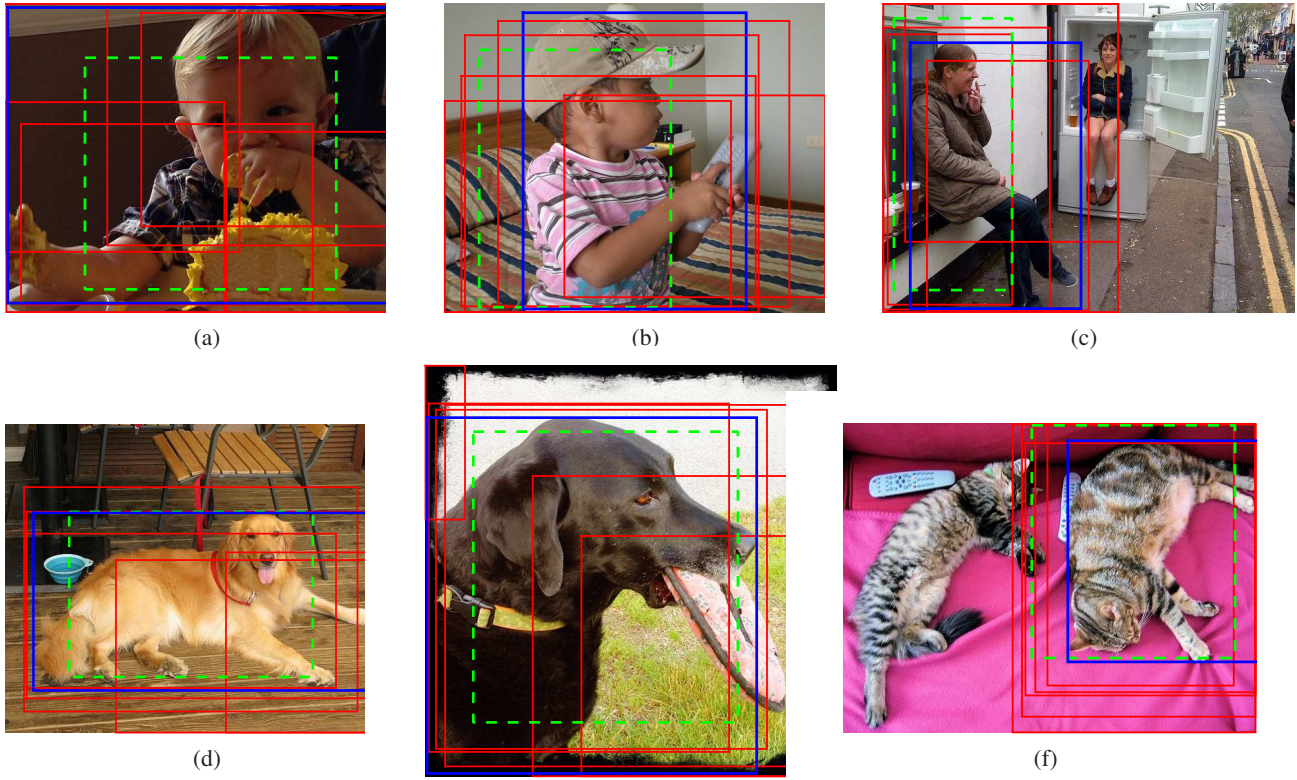

    \subfigure[]{\includegraphics[width=.32\textwidth]{person321214.eps}} %
    \subfigure[]{\includegraphics[width=.32\textwidth]{person245460.eps}} %
    \subfigure[]{\includegraphics[width=.32\textwidth]{person24243.eps}}
    \subfigure[]{\includegraphics[width=.32\textwidth]{dog256628.eps}} %
    \subfigure[]{\includegraphics[width=.32\textwidth]{dog121443.eps}} %
    \subfigure[]{\includegraphics[width=.32\textwidth]{cat39769.eps}}
    \caption{Sample images from the bounding box experiment. The ground truth bounding box is shown in blue, U-PCR in dashed green, and the regressors are shown in red.}
    \label{fig:bbox}
\end{figure}

\end{document}